\newtheorem{definition}{Definition}[section]
\newtheorem{example}{Example}[section]
\newtheorem{proof}{Proof}[section]
\newtheorem{lemma}{Lemma}[section]
\definecolor{lightblue}{rgb}{0.68, 0.85, 0.90}
\title{StruProKGR: A Structural and Probabilistic Framework for Sparse Knowledge Graph Reasoning}
\author{
    Yucan Guo$^{1,2}$,
    \setcounter{footnote}{1}
    Saiping Guan$^{1,2}$\thanks{Corresponding authors.},
    Miao Su$^{1,2}$,
    Zeya Zhao$^{3}$, \\
    \textbf{Xiaolong Jin}$^{1,2\dagger}$,
    \textbf{Jiafeng Guo}$^{1,2}$,
    \textbf{Xueqi Cheng}$^{1,2}$
     \\
    \textsuperscript{1}CAS Key Laboratory of Network Data Science and Technology, \\ Institute of Computing Technology, Chinese Academy of Sciences \\
    \textsuperscript{2}School of Computer Science and Technology, University of Chinese Academy of Sciences \\
    \textsuperscript{3}Beijing Institute of Tracking and Telecommunications Technology \\
    \small
    \texttt{\{guoyucan23z, guansaiping, sumiao22z, jinxiaolong, guojiafeng, cxq\}@ict.ac.cn}\\
}
\begin{document}
\begin{sloppypar}
\maketitle
\begin{abstract}
Sparse Knowledge Graphs (KGs) are commonly encountered in real-world applications, where knowledge is often incomplete or limited. Sparse KG reasoning, the task of inferring missing knowledge over sparse KGs, is inherently challenging due to the scarcity of knowledge and the difficulty of capturing relational patterns in sparse scenarios. Among all sparse KG reasoning methods, path-based ones have attracted plenty of attention due to their interpretability. Existing path-based methods typically rely on computationally intensive random walks to collect paths, producing paths of variable quality. Additionally, these methods fail to leverage the structured nature of graphs by treating paths independently. To address these shortcomings, we propose a \underline{Stru}ctural and \underline{Pro}babilistic framework named StruProKGR, tailored for efficient and interpretable reasoning on sparse KGs. StruProKGR utilizes a distance-guided path collection mechanism to significantly reduce computational costs while exploring more relevant paths. It further enhances the reasoning process by incorporating structural information through probabilistic path aggregation, which prioritizes paths that reinforce each other. Extensive experiments on five sparse KG reasoning benchmarks reveal that StruProKGR surpasses existing path-based methods in both effectiveness and efficiency, providing an effective, efficient, and interpretable solution for sparse KG reasoning.\footnote{The code is publicly available at \url{https://github.com/YucanGuo/StruProKGR}.} 
\end{abstract}

\section{Introduction}
\label{sec:Introduction}
\begin{figure}[htbp]
    \centering
    \includegraphics[width=\linewidth]{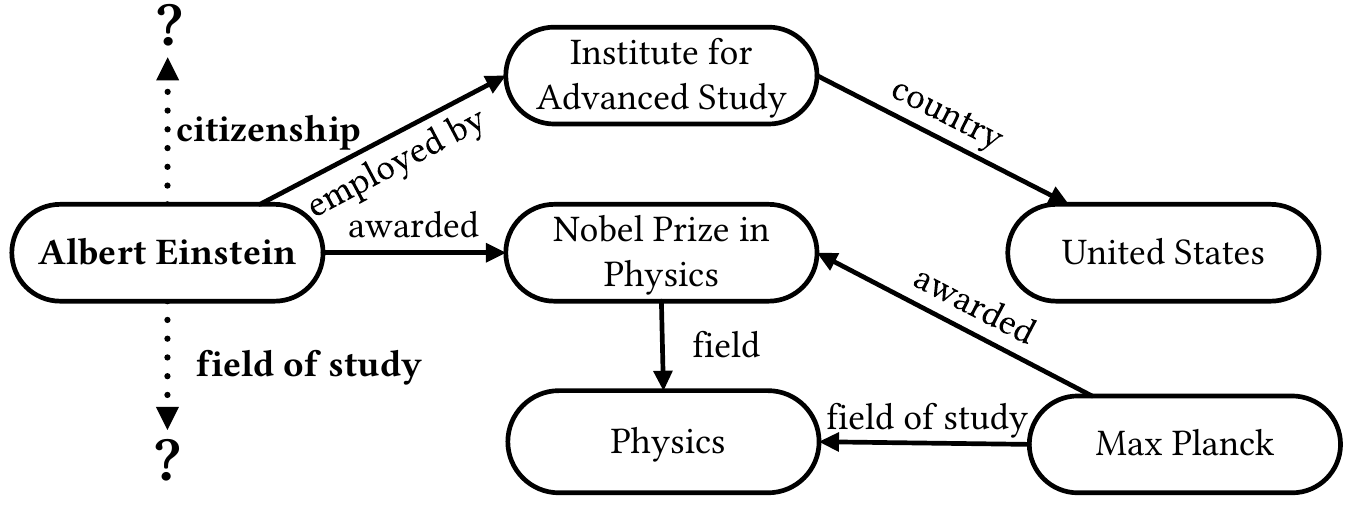}
    \caption{An example illustrating sparse KG reasoning.\label{fig:KGR_example}}
\end{figure}
Knowledge Graphs (KGs) contain facts in the form of triples $(head\ entity, relation, tail\ entity)$, denoted as $(h,r,t)$. They support a variety of downstream applications, including question answering~\cite{Agarwal2024symkgqa, Liu2025ontology}, recommender systems~\cite{Wang2024unleashing, Wang2025knowledge}, and information retrieval~\cite{Gutierrez2024hipporag, Cai2025simgrag}. 
In real-world situations, KGs often exhibit sparsity, as many triples are missing due to incomplete knowledge collection.  For example, considering Freebase~\cite{Bollacker2008freebase}, the well-known open-source KG, 71\% of individuals in Freebase have no recorded place of birth, and 75\% have no identified nationality~\cite{Dong2014knowledge}. 
Sparse KG reasoning, the task of inferring missing knowledge over sparse KGs, is crucial for uncovering valuable insights in knowledge-scarce scenarios. 

\Cref{fig:KGR_example} shows an example of sparse KG reasoning over a highly incomplete KG with six entities and five relations. Solid arrows denote facts that are observed, while the dotted arrows highlight the relations that should be completed, i.e., $(Albert\ Einstein,\ citizenship,\ ?)$ and $(Albert\ Einstein,\ field\ of\ study,\ ?)$. These two relations do not appear explicitly in the observed graph but can be inferred from other information within it. The $citizenship$ of $Albert\ Einstein$ is likely to be deduced from the institution he employed by, while the $field\ of\ study$ can often be inferred by propagating the $field$ from the prize to its laureate. 
However, the scarcity of knowledge and the complexity of understanding intricate relational patterns render this task exceptionally challenging. Existing KG reasoning methods often fall short in sparsity scenarios~\cite{Pujara2017sparsity}, highlighting the need for approaches specifically designed for sparse KGs.

Sparse KG reasoning methods fall into three major categories: embedding-based, rule-based, and path-based methods.
Embedding-based methods~\cite{Zhang2022rethinking, Tan2023kracl, Chen2024hogrn} encode entities and relations into continuous spaces and yield strong predictive performance, but they are often opaque.
Rule-based methods~\cite{Meilicke2020reinforced, Sun2023effective} mined symbolic rules from KGs and provide interpretability, yet they suffer from scalability issues and costly rule mining.
Path-based methods~\cite{Lao2011random, Lv2020dynamic, Guan2024look}, by contrast, trace explicit relational paths without requiring meticulously designed rules, offering transparency that is crucial for trustworthy knowledge inference.
Existing path-based methods typically rely on either random walk-based~\cite{Lao2011random, Gardner2015efficient, Guan2024look} or reinforcement learning (RL)-based strategies~\cite{Das2018go, Lv2020dynamic} for path collection, followed by path reasoning over the collected paths.
However, both path collection and path reasoning stages remain challenging. Random walk-based approaches are computationally expensive on large KGs and often generate low-relevance paths due to stochastic exploration. RL-based approaches guide path selection via learned policies but compromise interpretability.
Moreover, most path-based methods reason over paths independently, ignoring the structural dependencies among paths in sparse KGs. This assumption prevents models from capturing collective relational patterns and limits reasoning accuracy.

To overcome these challenges, we present StruProKGR, a novel path-based framework meticulously designed for effective, efficient, and interpretable reasoning over sparse KGs. 
StruProKGR introduces a distance-guided path collection mechanism that markedly reduces computational overhead compared to random walk-based methods. This approach leverages distances to the tail entity to prioritize paths that are most likely to contribute to accurate reasoning outcomes, thereby optimizing the exploration process of sparse KGs. By prioritizing paths with high relevance to target relations, this approach ensures both effectiveness and efficiency, addressing the scalability concerns of prior random walk-based techniques. 
Additionally, StruProKGR utilizes the structural properties of sparse KGs through a probabilistic path aggregation strategy during path reasoning. This approach considers the correlations among paths as a whole, resulting in more accurate inferences of missing knowledge while preserving the interpretability of path-based methods.

In summary, the contributions of this paper are as follows: 
\begin{itemize}[leftmargin=*]
    \item We present StruProKGR, a training-free path-based framework for effective, efficient, and interpretable reasoning over sparse KGs.
    \item We design a distance-guided path collection and a probabilistic path aggregation mechanism that reduce computational overhead while leveraging graph structure to enhance reasoning accuracy.
    \item Extensive experiments on five sparse KG benchmarks demonstrate the effectiveness and efficiency of StruProKGR.
\end{itemize}

\section{Related Work}
\label{sec:Related Work}
In this section, we review the related work of embedding-based, rule-based, and path-based sparse KG reasoning methods.

\smallskip
\noindent\textbf{Embedding-based Methods.}
Embedding-based methods learn vector representations for entities and relations in a KG, using these to score the plausibility of potential triples.
Early methods like TransE~\cite{Bordes2013translating} interpret relations as translations in vector space, while DistMult~\cite{Yang2015embedding} employs the bilinear objective to learn relational semantics. 
More advanced methods, such as ConvE~\cite{Dettmers2018convolutional} and TuckER~\cite{Balavzevic2019tucker}, leverage convolutional neural networks and tensor factorization to model complex interactions. 
Recent studies~\cite{Tan2023kracl, Chen2024hogrn} integrate graph context into their models to tackle the sparsity issue of sparse KGs.
These methods often achieve strong prediction performance but suffer from limited interpretability and high computational costs due to representation learning.

\smallskip
\noindent\textbf{Rule-based Methods.}
Rule-based methods mine logical rules from KGs to infer new knowledge, offering clear explanations for predictions. 
AMIE~\cite{Galarraga2013amie} and AnyBURL~\cite{Meilicke2020reinforced} extract horn clauses to capture relational patterns, with AnyBURL incorporating RL to enhance rule mining. 
NTP~\cite{Rocktaschel2017end} integrates differentiable proving with subsymbolic representations, enabling logical rule induction through gradient-based optimization.
RLvLR~\cite{Omran2018scalable} presents a method that combines representation learning with closed path rule mining, using embeddings and sampling to handle large KGs. 
However, these methods face challenges in sparse KGs, where limited facts reduce rule coverage and reliability. Additionally, the rule mining process can become computationally intensive as the complexity of the rules increases. 

\smallskip
\noindent\textbf{Path-based Methods.}
Path-based KG reasoning methods collect and traverse relational paths to infer missing knowledge, typically using either random walk-based or RL-based strategies.
Random walk-based methods are pioneered by PRA~\cite{Lao2011random}, with later extensions such as Prob-CBR~\cite{Das2020probabilistic} introducing probabilistic case-based reasoning and LoGRe~\cite{Guan2024look} constructing a global relation-path schema to mitigate sparsity. However, as the KG complexity increases, the computational cost of random walks escalates exponentially, rendering these approaches impractical for large-scale KGs.
RL-based methods instead learn to navigate paths. DacKGR~\cite{Lv2020dynamic} expands the search space with dynamically added edges, SparKGR~\cite{Xia2022iterative} integrates rule-guided iterative refinement, and recent systems such as DT4KGR~\cite{Xia2024dt4kgr} and Hi-KnowE~\cite{Xie2024hierarchical} incorporate decision Transformers or hierarchical RL. Despite these advances, RL-based methods commonly require handcrafted reward functions or external resources (e.g., KG embeddings), which add complexity and undermine the interpretability central to path-based reasoning.

\section{Problem Statement}
\label{sec:Problem Definition}
We first introduce key concepts related to KGs and sparse KGs, then formally define the task of sparse KG reasoning.

\subsection{Preliminaries}
\begin{definition}[KG]
Given an entity set $\mathcal{E}$ and a relation set $\mathcal{R}$, a KG is a directed graph $\mathcal{G}=\{(h,r,t)|h, t\in\mathcal{E}, r\in\mathcal{R}\}$, where each entity $e\in\mathcal{E}$ belongs to an entity type $c \in \mathcal{C}$, each triple $(h,r,t)$ indicates that there is a relation $r$ from the head entity $h$ to the tail entity $t$. 
\end{definition}

A sparse KG~\cite{Lv2020dynamic} refers to a KG where entities contain fewer links and facts than in a regular KG. In practice, sparsity manifests as low triple density and weak connectivity between entities, which significantly increases the difficulty of reasoning.
To address sparsity, path-based methods often leverage relational sequences, i.e., relation paths, which describe multi-hop connections between entities. Relation paths can be defined at different granularities~\cite{Guan2024look}: (1) \textbf{Type-specific relation paths} capture connections between entities of a given type and a target relation; (2) \textbf{Relation paths} generalize these by aggregating type-specific relation paths across multiple entity types.

\subsection{Problem Definition}
\begin{definition}[Sparse KG reasoning]
Given a sparse KG $\mathcal{G}_s$ and a query $(h,r,?)$, where $h \in \mathcal{E}$ is a head entity and $r \in \mathcal{R}$ is a relation, the task is to predict the missing tail entity $t \in \mathcal{E}$ such that $(h,r,t)$ is likely to hold in $\mathcal{G}_s$.
\end{definition}

Queries of the form $(?,r,t)$ can be equivalently handled by introducing the inverse relation $r^{-1}$ and reformulating the query as $(t,r^{-1},?)$. Hence, it suffices to study the $(h,r,?)$ case.

\begin{figure*}[t]
    \centering
    \includegraphics[width=\linewidth]{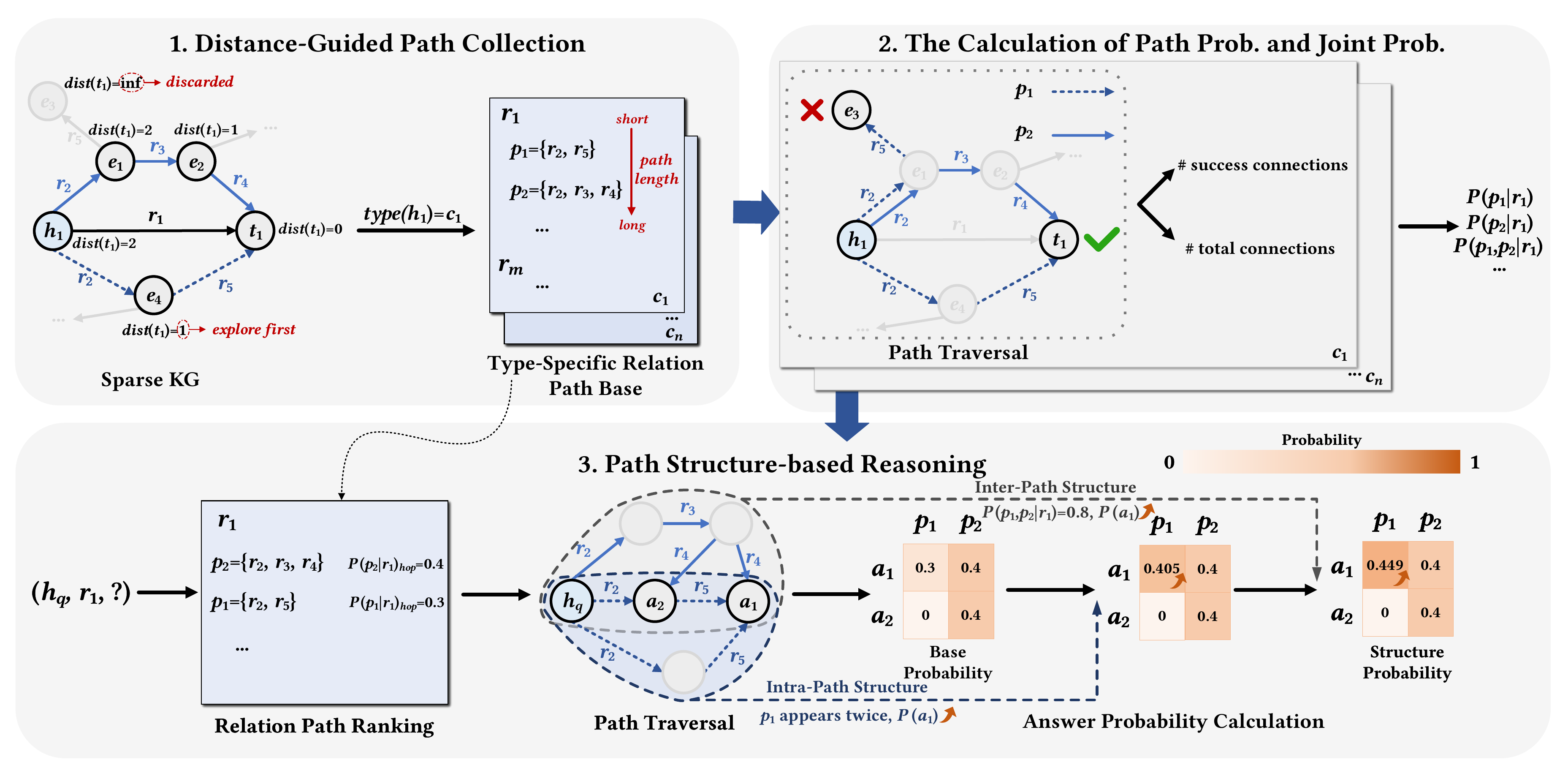}
    \caption{The proposed StruProKGR framework.\label{fig:framework}}
\end{figure*}
\section{Methodology}
\label{sec:Methodology}
In this section, we introduce the details of the proposed StruProKGR framework, and \Cref{fig:framework} illustrates the architecture of the entire framework. 
Specifically, StruProKGR is designed with three main phases: distance-guided path collection (\cref{sec:Distance-Guided Path Collection}), the calculation of path probability and joint probability (\cref{sec:The Calculation of Path Prob. and Joint Prob.}), and path structure-based reasoning (\cref{sec:Path Structure-based Reasoning}).

\subsection{Distance-Guided Path Collection}
\label{sec:Distance-Guided Path Collection}
In path-based sparse KG reasoning methods, efficiently collecting relevant paths between entities is critical for accurate reasoning. We introduce a distance-guided path collection mechanism that enhances computational efficiency and path relevance.
Short paths, which are more likely to be rules in path-based methods, are prioritized by leveraging distance information to guide and prune the path collection process.
The distance-guided path collection phase adopts a Depth-First Search (DFS) procedure to collect type-specific relation paths, and leverages precomputed shortest-path information to prune the search space aggressively. 
Specifically, this phase consists of two steps, i.e., (1) distances precomputation, and (2) distance-guided path collection. 

\smallskip
\noindent\textbf{Distances Precomputation.} 
We perform a Breadth‐First Search (BFS) from each entity $u\in\mathcal{E}$, up to depth $l_{max}$, to fill the matrix $dist[u][v]$, where $l_{max}$ is the maximum path length. 
This truncated BFS records the minimum number of hops from $u$ to every reachable $v$ with $dist[u][v]\leq l_{max}$. By doing so, once at initialization, we can quickly check whether a partial path still has a chance to reach the target, thereby avoiding unnecessary visits to paths that are guaranteed to fail.

\smallskip
\noindent\textbf{Distance-Guided Path Collection.} 
For each training triple $(h,r,t)$, we perform a DFS starting from the head entity $h$ to reach the tail entity $t$. The search incrementally builds a candidate type-specific relation path $path = [r_1, r_2, \dots, r_m]$ for entity type $c\in C$ of $h$, which records the sequence of relations traversed so far. At each expansion step, a current entity $u$ can only move to a neighbor $v$ if
\begin{equation}
dist[v][t] \leq l_{max} - len(path) - 1,
\end{equation}
where $len(path)$ denotes the length of the current path. 
This guarantees that $v$ can still reach $t$ within the remaining length budget, eliminating futile branches.
To further narrow down the search, at each step, only the top-$k$ neighbors ranked by $dist[v][t]$ are retained, focusing the search on the most promising candidates and avoiding the redundancy of exploring many low-quality paths.
The two-stage pruning above ensures that only those branches that can feasibly reach the target within the allotted steps are considered, and concentrates efforts on the nearest neighbors. 

Formally, the procedure outputs the set of all collected paths $\mathcal{P}$, where each type-specific subset $\mathcal{P}(c,r)$ contains paths for entity type $c$ and relation $r$.
A complete algorithmic description is provided in~\Cref{appendix:DGPC}.

\subsection{The Calculation of Path Probability and Joint Probability}
\label{sec:The Calculation of Path Prob. and Joint Prob.}
To prepare for the path structure-based reasoning phase, we need to calculate the \emph{path probability} and \emph{joint path probability} of paths in a sparse KG. The path probability aggregates type-specific relation paths across entity types to form a unified measure for relation paths, while the joint probability evaluates path pairs.

\smallskip
\noindent\textbf{Path Probability.}
The probability of a relation path $p$ for a relation $r$, denoted $P(p|r)$, quantifies the precision of $p$ in connecting a head entity $h$ to a tail entity $t$. Since relation paths are aggregated from type-specific relation paths defined for specific entity types, we traverse the collected type-specific relation paths and aggregate them to form $P(p|r)$. 
It is defined as: 
\begin{equation}
P(p|r) = \frac{\sum_{c \in \mathcal{C}} S_c(r,p)}{\sum_{c \in \mathcal{C}} T_c(r,p)},
\end{equation}
where $S_c(r,p)$ is the number of occurrences for path $p \in \mathcal{P}(c,r)$ that successfully reach the correct tail entities, and $T_c(r,p)$ is the total number of entities reached.
This metric prioritizes paths that consistently yield accurate inferences, forming the foundation for reliable reasoning.

\smallskip
\noindent\textbf{Joint Path Probability.} The joint probability of two paths $p_i$ and $p_j$ for a relation $r$, denoted $P(p_i, p_j|r)$, measures the likelihood that both paths collaboratively infer the relation $r$ correctly and reflects the combined reliability of path pairs. It is defined as: 
\begin{equation}
P(p_i, p_j|r) = \frac{JS(r,p_i,p_j)}{JT(r,p_i,p_j)},
\end{equation}
where $JS(r,p_i,p_j)$ is the number of joint correct occurrences for path pair $(p_i, p_j)$, and $JT(r,p_i,p_j)$ is the total number of joint occurrences.

To calculate $P(p|r)$ and $P(p_i, p_j|r)$ efficiently, we propose a batch search-based path traversal algorithm, which is detailed in \Cref{appendix:Probability_Calculation}.

\subsection{Path Structure-based Reasoning}
\label{sec:Path Structure-based Reasoning}
Sparse KGs inherently contain complex structural properties that need to be taken into account. For example, a relation may be more likely to hold true when certain paths occur together, while it may be less likely to be true when specific combinations of paths coexist. 
Thus, in the final phase, StruProKGR models structural properties of paths in a probabilistic manner to conduct reasoning effectively. 
We categorize structural properties in sparse KGs into two main groups: \emph{intra-path structures}, which relate to the internal characteristics of individual path types, and \emph{inter-path structures}, which focus on the relationships between different types of paths.

\smallskip
\noindent\textbf{Relation Path Ranking.}  
Before reasoning, collected paths are ranked by incorporating a hop decay factor $\alpha^{\text{len}(p)-1}$ into their probability~\cite{Guan2024look}, where $\alpha \in (0, 1)$ and $len(p)$ is the path length. 
The adjusted probability is calculated by
\begin{equation}
    P(p|r)_{hop} = P(p|r) \cdot \alpha^{\text{len}(p) - 1},
\end{equation}
which provides a measurable assessment of path relevance by balancing informativeness and conciseness, and serves as the base probability for the subsequent path structure-based probability update.

\smallskip
\noindent\textbf{Intra-Path Structure Modeling.}  
Intra-path structure focuses on the repetitive occurrence of a single path type that reaches the same entity during reasoning, and we model the contribution of repetition in a diminishing way. 
Specifically, each additional occurrence of the same path contributes less to the overall probability, which is defined as follows:
\begin{equation}
    P(p|r)_{k} = \beta^{k-1} \cdot P(p|r)_{hop},
\end{equation}
where $k$ means the $k$-th occurrence of path $p$, and the diminishing factor $\beta \in (0, 1)$. 

Consequently, the probability of a path, taking into account the intra-path structure, can be expressed as follows: 
\begin{equation}
    P(p|r)_{intra} = 1-\prod_{i=1}^{T_p}\big( 1-P(p|r)_{i}\big),
\end{equation}
where $T_p$ represents the total occurrences of path $p$ from the same head entity to the same tail entity.

\smallskip
\noindent\textbf{Inter-Path Structure Modeling.}
Inter-path structure examines the relationships between different types of paths, which often exhibit complex interactions that influence the accuracy of inferring missing knowledge. 
To address this, we propose a probabilistic framework that models inter-path structures using path probabilities and joint probabilities.

\textit{Likelihood Ratio Calculation.} 
Bayes' theorem provides a principled way of updating prior beliefs in light of new evidence, where the strength of evidence is captured by a likelihood ratio~\cite{Joyce2021bayes}. Following this, we introduce a scalable approximation of the likelihood ratio that aggregates evidence from multiple paths:
\begin{multline}
    LR(p_i, \mathcal{P}(r) \setminus \{p_i\}) = \\
    \frac{\sum_{p_j} P(p_i, p_{j} | r)}{\sum_{p_j} [P(p_i | r) + P(p_j | r) - P(p_i | r) \cdot P(p_j | r)]},
\end{multline}
where $p_j \in \mathcal{P}(r) \setminus \{p_i\}$ and subject to the condition $P(p_j|r)_{hop} > P(p_i|r)_{hop}$.
The ratio compares the observed joint correctness to the expected correctness under independence. A value greater than 1 suggests that the paths are more likely to be correct together than independently, indicating collaboration, while a value less than 1 suggests inhibition. A mathematical proof for the approximation is provided in~\Cref{appendix:proof_for_inter_path_structure}.

\textit{Updating Inter-Path Probabilities.} 
After calculating the likelihood ratio, path probabilities considering the inter-path structure can be determined using the odds form of Bayes' theorem.
The prior odds for path $p_i$ are calculated as:
\begin{equation}
    O(p_i) = \frac{P(p_i | r)_{intra}}{1 - P(p_i | r)_{intra}},
\end{equation}
reflecting the initial confidence in $p_i$ before accounting for interactions. The posterior odds, adjusted based on evidence from other paths, are then:
\begin{multline}
    O(p_i | \mathcal{P}(r) \setminus \{p_i\}) = \\
    O(p_i) \cdot LR(p_i, \mathcal{P}(r) \setminus \{p_i\}).
\end{multline}
This step updates our confidence in $p_i$ by incorporating the influence of other paths in $\mathcal{P}(r)$.
To obtain the updated probability, we convert the posterior odds back to a probability:
\begin{equation}
    P(p_i | r)_{inter} = \frac{O(p_i | \mathcal{P}(r) \setminus \{p_i\})}{1 + O(p_i | \mathcal{P}(r) \setminus \{p_i\})}.
\end{equation}

\noindent\textbf{Reasoning Workflow.} 
Bringing together intra- and inter-path structures, StruProKGR executes reasoning in three steps.  
First, for a given query $(h,r,?)$, it selects the top-$N_{top}$ relation paths from $\mathcal{P}(r)$ and traverses them starting from the head entity $h$, thereby gathering a set of candidate answers.  
Second, for each candidate, path probabilities are adjusted by incorporating both intra-path repetition effects and inter-path interactions, yielding $P(p|r)_{inter}$ for all contributing paths.  
Third, the probability of each candidate $a$ is aggregated from the set of paths $\mathcal{P}_a$ as
\begin{equation}
    P(a) = 1 - \prod_{p \in \mathcal{P}_a} \big(1 - P(p|r)_{inter}\big).
\end{equation}
By doing so, evidence from multiple supporting paths is combined, thereby jointly increasing the confidence in candidate $a$. 
The candidates are then ranked by their final scores, producing the reasoning output.  
The algorithmic details and pseudocode are provided in \Cref{appendix:reasoning_algorithm}.

\section{Experiments}
\label{sec:Experiments}
In this section, we conduct extensive experiments to verify the effectiveness, efficiency, and interpretability of StruProKGR across five widely recognized benchmark datasets for sparse KGs. The empirical findings are aimed at addressing the following key research questions: 
\textbf{RQ1.} How does StruProKGR perform against existing state-of-the-art methods in sparse KG reasoning?
\textbf{RQ2.} To what extent does the distance-guided path collection phase enhance both the effectiveness of the overall reasoning process and the efficiency of the path collection process?
\textbf{RQ3.} What are the impacts of different components in the path structure-based reasoning phase, including intra-path structure and inter-path structure?

\subsection{Experimental Setup}
\subsubsection{Datasets}
We utilize five benchmark datasets that are widely used for sparse KG reasoning tasks: FB15K-237-10\%, FB15K-237-20\%, FB15K-237-50\%, NELL23K, and WD-singer~\cite{Lv2020dynamic}. 
\textbf{FB15K-237-10\%}, \textbf{FB15K-237-20\%}, and \textbf{FB15K-237-50\%} are subsampled versions of FB15K-237~\cite{Toutanova2015representing}, retaining 10\%, 20\%, and 50\% of the original triples, respectively.
\textbf{NELL23K} is a randomly sampled dataset from the NELL~\cite{Carlson2010toward} knowledge base.
\textbf{WD-singer} is a domain-specific Wikidata~\cite{Vrandevcic2014wikidata} subset focused on singer-related entities.
\Cref{tab:dataset_stats} provides statistical details of these datasets.

\begin{table}[h]
    \centering
    \setlength{\tabcolsep}{2pt}
    \small
    \caption{Statistics of the datasets used in experiments.}
    \label{tab:dataset_stats}
    \begin{tabular}{c|c|c|c|c|c}
        \toprule
        Dataset & \# Ent. & \# Rel. & \# Train & \# Valid & \# Test \\
        \midrule
        FB15K-237-10\% & 11,512 & 237 & 27,211 & 15,624 & 18,150\\
        FB15K-237-20\% & 13,166 & 237 & 54,423 & 16,963 & 19,776\\
        FB15K-237-50\% & 14,149 & 237 & 136,057 & 17,449 & 20,324\\
        NELL23K & 22,925 & 200 & 25,445 & 4,961 & 4,952\\
        WD-singer & 10,282 & 131 & 15,906 & 2,084 & 2,134\\
        \bottomrule
    \end{tabular}
\end{table}
\begin{table*}[htb]
	\setlength{\tabcolsep}{2.5pt}
	\centering
	\small
	\caption{Experimental results presented in terms of MRR and Hits@\{3, 10\} (\%). The best scores for rule-based methods (the second block) and path-based methods (the third block) are in bold, while the best scores for embedding-based methods (the first block) are underlined.\textsuperscript{2}}
	\makebox[\columnwidth][c]{
		\begin{tabular}{c|ccc|ccc|ccc|ccc|ccc}
			\toprule[1pt]
			\multirow{2}{*}{Method} &\multicolumn{3}{c|}{FB15K-237-10\%} &\multicolumn{3}{c|}{FB15K-237-20\%} &\multicolumn{3}{c|}{FB15K-237-50\%} &\multicolumn{3}{c|}{NELL23K} &\multicolumn{3}{c}{WD-singer}\\
            [-1em]\\
			\cline{2-16}\\[-0.8em]
            &MRR &H@3 &H@10 &MRR &H@3 &H@10 &MRR &H@3 &H@10 &MRR &H@3 &H@10 &MRR &H@3 &H@10\!\\
			\midrule[0.5pt]
            TransE &0.105 &15.9 &27.9 &0.123 &18.0 &31.3 &0.177 &23.4 &40.4 &0.084 &10.9 &24.7 &0.210 &32.1 &44.6\\
			TuckER &0.252 &27.2 &40.4 &\underline{0.268} &\underline{28.9} &\underline{42.8} &0.314 &\underline{34.2} &50.1 &0.276 &30.2 &46.7 &0.421 &47.1 &57.1\\
			ConvE &0.245 &26.2 &39.1 &0.261 &28.3 &41.8 &0.313 &\underline{34.2} &50.1 &0.276 &30.1 &46.4 &0.448 &47.8 &56.9\\
			NBFNet &0.241 &26.3 &38.8 &0.260 &27.8 &41.7 &\underline{0.316} &34.1 &\underline{50.3} &0.274 &28.9 &46.9 &0.453 &49.3 &\underline{58.9}\\
			KRACL &0.164 &17.0 &21.2 &0.170 &16.9 &19.8 &0.222 &26.8 &44.4 &0.158 &15.8 &27.6 &0.142 &13.4 &20.7\\  
			HoGRN &\underline{0.257} &\underline{27.5} &\underline{41.2} & - & - & - & - & - & - &\underline{0.292} &\underline{32.2} &\underline{49.1} & \underline{0.470} & \underline{51.0} & 57.8\\
			\midrule[0.75pt]
			NTP &0.083 &11.4 &16.9 &0.173 &16.1 &21.7 &0.222 &23.1 &30.7 &0.132 &14.9 &24.1 &0.292 &31.1 &44.2\\
			RLvLR &0.107 &12.2 &20.6 &0.132 &15.2 &27.1 &0.199 &20.8 &32.4 &0.152 &17.3 &25.0 &0.374 &32.0 &47.6\\
			AnyBURL &0.149 &15.5 &26.7 &0.164 &16.7 &29.3 &0.198 &21.3 &35.1 &0.176 &18.5 &25.2 &0.392 &34.1 &48.6\\
			\midrule[0.5pt]
			DacKGR &0.218 &23.9 &33.7 &0.242 &27.2 &38.9 &0.293 &32.0 &45.7 &0.197 &20.0 &31.6 &0.377 &42.1 &48.5\\
			SparKGR &0.228 &24.5 &35.0 &0.252 &27.7 &39.1 &0.292 &32.0 &46.2 &0.203 &22.2 &33.9 &0.393 &43.7 &50.7\\
			DT4KGR &- &- &- &0.254 &- &40.1 &0.297 &- &46.2 &- &- &- &- &-\\
			Hi-KnowE &0.224 &\textbf{25.5} &34.1 &0.247 &27.7 &38.1 &- &- &- &- &- &- &- &- &-\\
            LoGRe &0.228 &24.5 &36.2 &0.261 &28.0 &41.3 &0.297 &32.7 &46.4 &0.259 &27.9 &41.7 &0.459 &48.9 &54.5\\
			\midrule[0.5pt]
            \rowcolor{cyan!10}
			StruProKGR &\textbf{0.234} &25.2 &\textbf{37.3} &\textbf{0.267} &\textbf{28.8} &\textbf{42.1} &\textbf{0.304} &\textbf{33.3} &\textbf{47.6} &\textbf{0.262} &\textbf{28.5} &\textbf{42.7} &\textbf{0.461} &\textbf{49.8} &\textbf{55.6}\\
			\bottomrule[1pt]
		\end{tabular}
	}
	\label{table:overall performance}
\end{table*}
\begin{table*}[htbp]
	\setlength{\tabcolsep}{2.5pt}
	\centering
	\small
	\caption{Effectiveness analysis of path collection algorithms presented in terms of MRR and Hits@\{3, 10\} (\%).}
	\makebox[\columnwidth][c]{
		\begin{tabular}{c|ccc|ccc|ccc|ccc|ccc}
			\toprule[1pt]
			\multirow{2}{*}{Method} &\multicolumn{3}{c|}{FB15K-237-10\%} &\multicolumn{3}{c|}{FB15K-237-20\%} &\multicolumn{3}{c|}{FB15K-237-50\%} &\multicolumn{3}{c|}{NELL23K} &\multicolumn{3}{c}{WD-singer}\\
            [-1em]\\
			\cline{2-16}\\[-0.8em]
            &MRR &H@3 &H@10 &MRR &H@3 &H@10 &MRR &H@3 &H@10 &MRR &H@3 &H@10 &MRR &H@3 &H@10\!\\
			\midrule[0.5pt]
            StruProKGR$_{\text{RW}}$ &0.226 &24.3 &36.2 &0.261 &28.2 &41.5 &0.298 &32.9 &46.7 &0.260 &28.1 &\textbf{42.8} &0.459 &49.4 &55.0\\
			StruProKGR &\textbf{0.234} &\textbf{25.2} &\textbf{37.3} &\textbf{0.267} &\textbf{28.8} &\textbf{42.1} &\textbf{0.304} &\textbf{33.3} &\textbf{47.6} &\textbf{0.262} &\textbf{28.5} &42.7 &\textbf{0.461} &\textbf{49.8} &\textbf{55.6}\\
			\bottomrule[1pt]
		\end{tabular}
	}
	\label{table:path collection}
\end{table*}
\subsubsection{Implementation Details}
We conduct a grid search to determine the optimal hyperparameter for the maximum branch number, $k \in \{3,5,10,15,20,30\}$. The applied settings are: $k=15$ for FB15K-237-10\%, $k=5$ for FB15K-237-20\%, $k=3$ for FB15K-237-50\%, $k=30$ for NELL23K and WD-singer.
The diminishing factor $\beta$ is set to 0.5, and the inter-path structure is only taken into account for the top 200 paths to save time. For the other hyperparameters, we adhere to the settings in LoGRe~\cite{Guan2024look}.

\subsubsection{Baselines and Evaluation Metrics}
\textbf{Baselines.} 
We compare StruProKGR against a diverse set of state-of-the-art methods. For path-based methods, we compared with DacKGR~\cite{Lv2020dynamic}, SparKGR~\cite{Xia2022iterative}, DT4KGR~\cite{Xia2024dt4kgr}, Hi-KnowE~\cite{Xie2024hierarchical}, and LoGRe~\cite{Guan2024look}. For rule-based methods, we compared with NTP~\cite{Rocktaschel2017end}, RLvLR~\cite{Omran2018scalable}, and AnyBURL~\cite{Meilicke2020reinforced}. For embedding-based methods, we compared with TransE~\cite{Bordes2013translating}, TuckER~\cite{Balavzevic2019tucker}, ConvE~\cite{Dettmers2018convolutional}, NBFNet~\cite{Zhu2021neural}, KRACL~\cite{Tan2023kracl}, and HoGRN~\cite{Chen2024hogrn}.

\smallskip
\noindent \textbf{Evaluation Metrics.} 
We adopt standard metrics for sparse KG reasoning, i.e., Mean Reciprocal Rank (MRR) and Hits@K (K=3,10). Higher scores indicate a better ranking of correct answers.

\setcounter{footnote}{2}
\footnotetext{Empty entries indicate that the method did not report results on the dataset. The code of DT4KGR and Hi-KnowE is also not publicly available.}
\subsection{Overall Performance (RQ1)}
To address \textbf{RQ1}, we evaluate the performance of StruProKGR against state-of-the-art methods across five benchmark datasets. As shown in \Cref{table:overall performance}, StruProKGR consistently outperforms all rule-based and path-based baselines. Notably, it achieves clear gains in MRR and Hits@\{3,10\}, with relative improvements on Hits@10 of 1.9\% to 3.0\% across all datasets. These improvements highlight the benefits of combining distance-guided path collection with probabilistic modeling of both intra-path and inter-path structures. While strong path-based methods such as LoGRe and SparKGR also exploit relational paths, they lack explicit modeling of structural dependencies, limiting their effectiveness in sparse settings.
Rule-based approaches remain less competitive than the other two categories of methods, as their reliance on strict logical rules restricts generalization under sparsity. Compared with embedding-based methods, StruProKGR achieves competitive performance despite not relying on dense representations or model training. Its MRR scores fall within 0.1\% to 3\% of the strongest embedding-based baselines, underscoring its effectiveness as a training-free and interpretable alternative.

\subsection{Path Collection Analysis (RQ2)}
\begin{figure*}[htbp]
    \centering
    \includegraphics[width=0.7\linewidth]{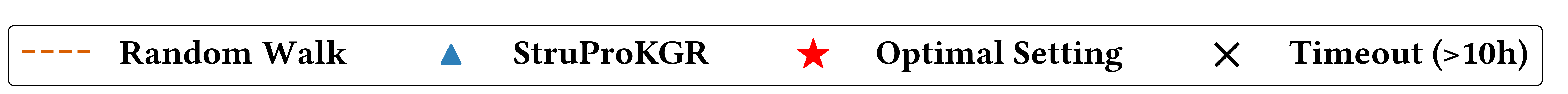}
    
    \subfigure[FB15K-237-10\%]
    {
        \includegraphics[width=.185\linewidth]{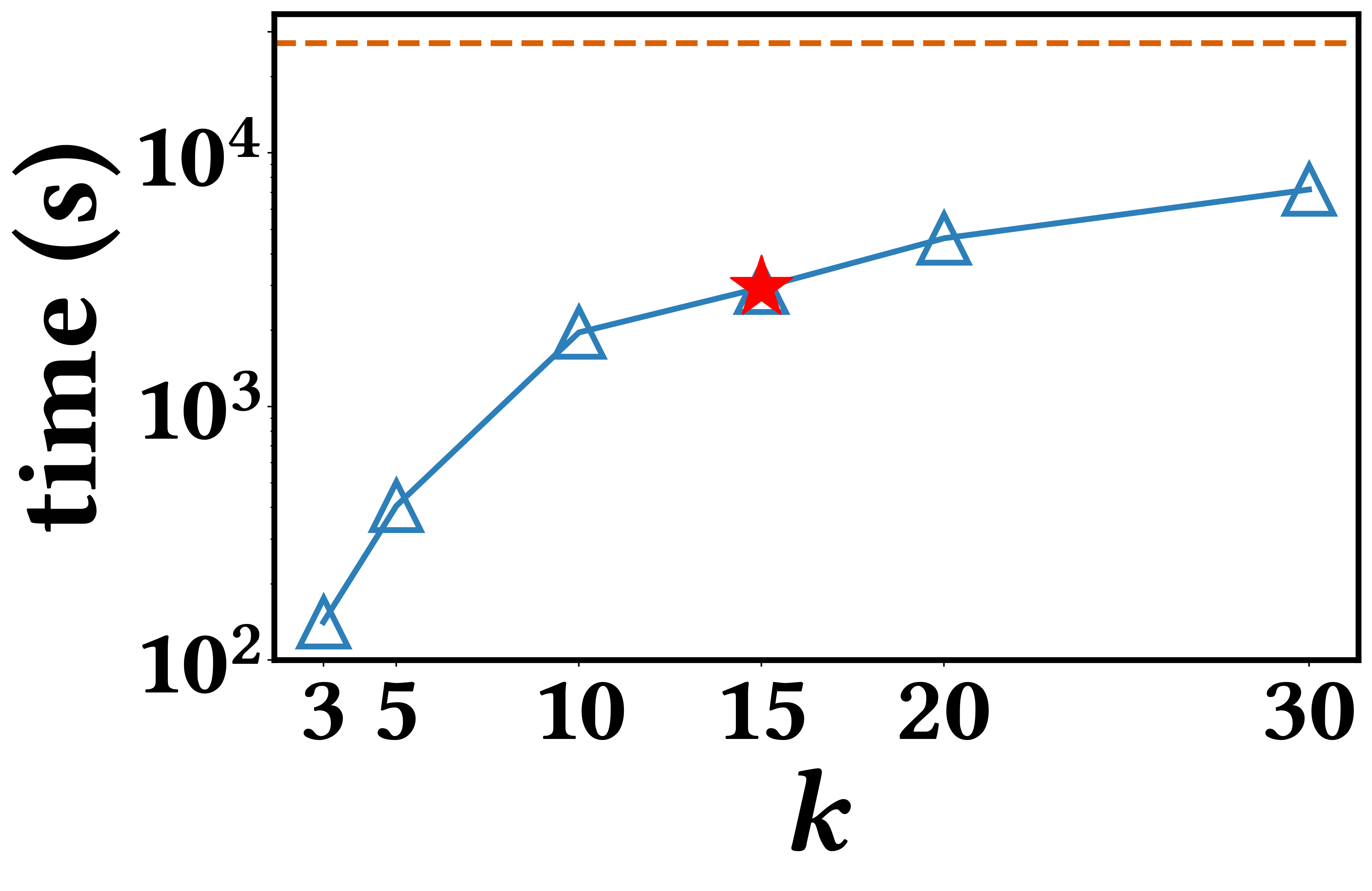}
        \label{fig:FB15K-237-10_efficiency}
    }\hspace{-0.01\linewidth}
    \subfigure[FB15K-237-20\%]
    {
        \includegraphics[width=.185\linewidth]{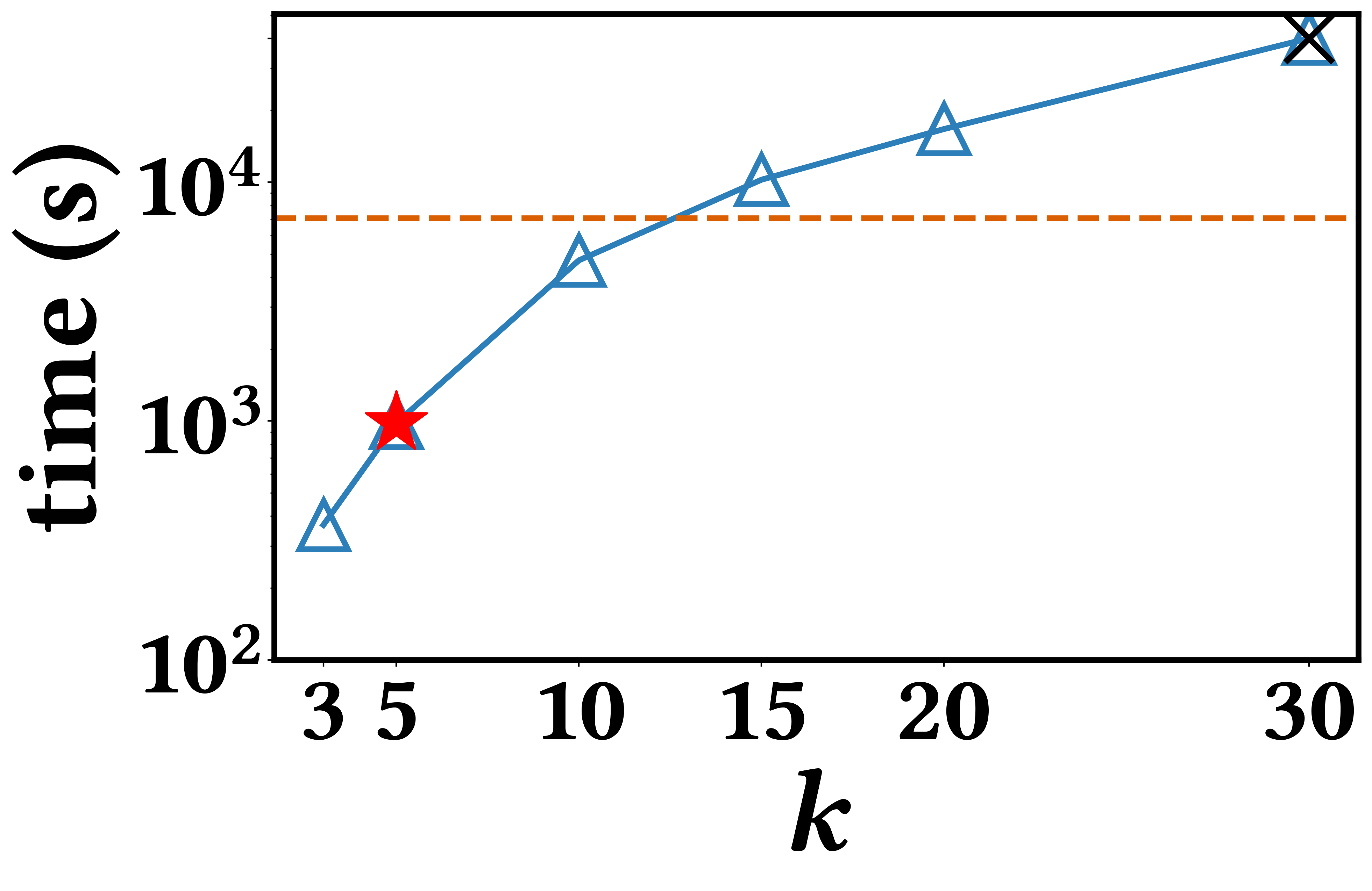}
        \label{fig:FB15K-237-20_efficiency}
    }\hspace{-0.01\linewidth}
    \subfigure[FB15K-237-50\%]
    {
        \includegraphics[width=.185\linewidth]{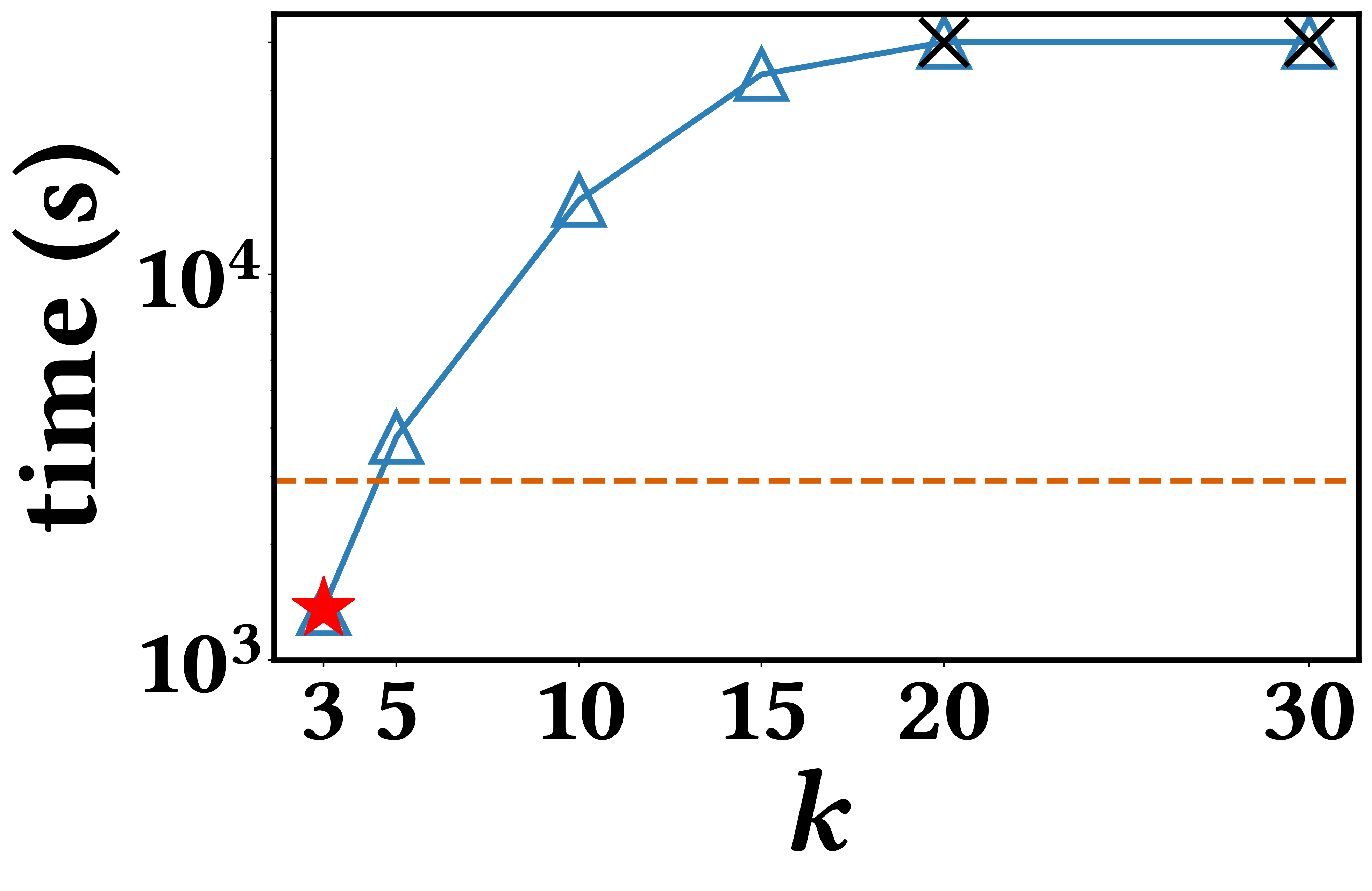}
        \label{fig:FB15K-237-50_efficiency}
    }\hspace{-0.01\linewidth}
    \subfigure[NELL23K]
    {
        \includegraphics[width=.185\linewidth]{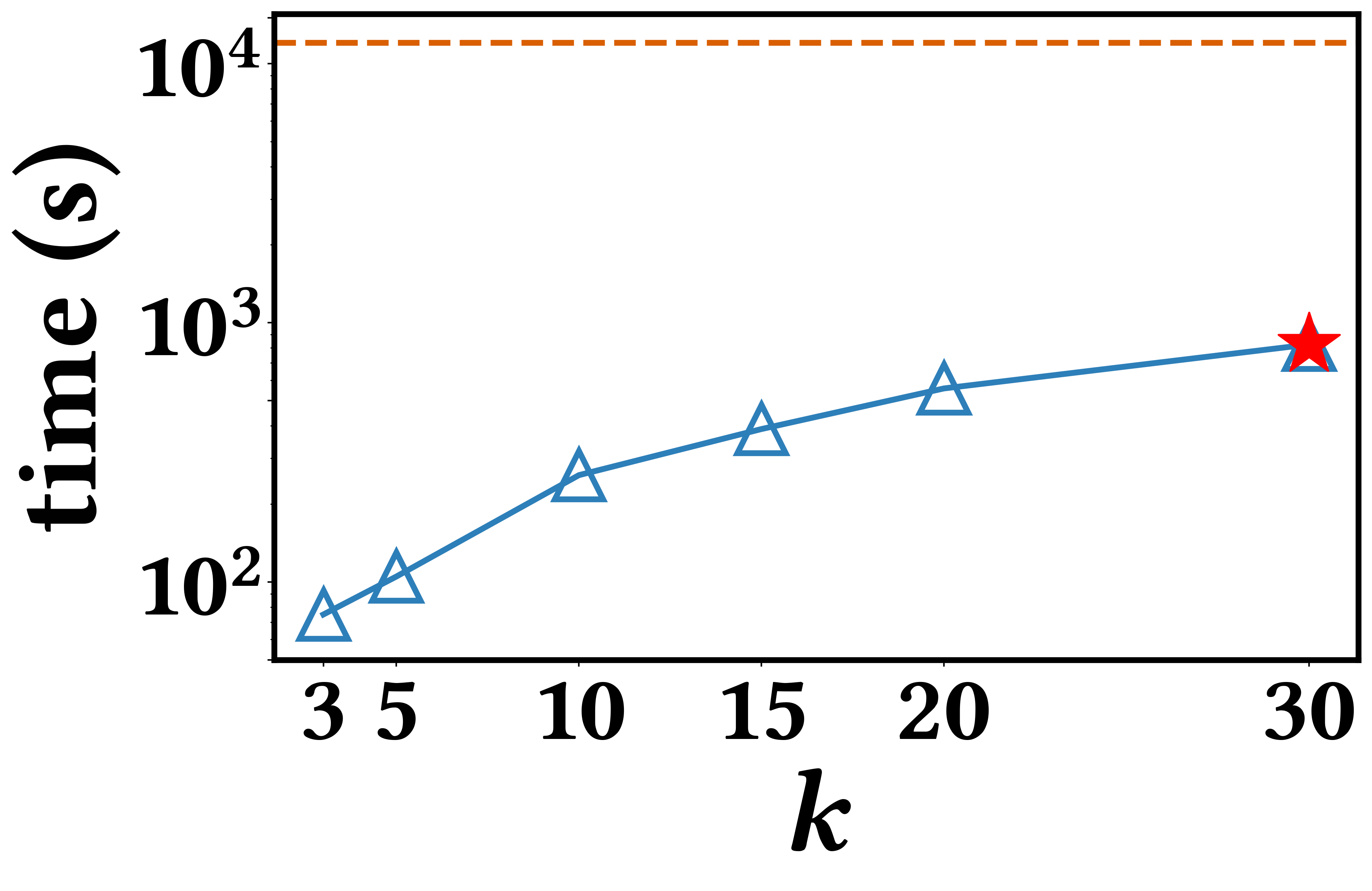}
        \label{fig:NELL23K_efficiency}
    }\hspace{-0.01\linewidth}
    \subfigure[WD-singer]
    {
        \includegraphics[width=.185\linewidth]{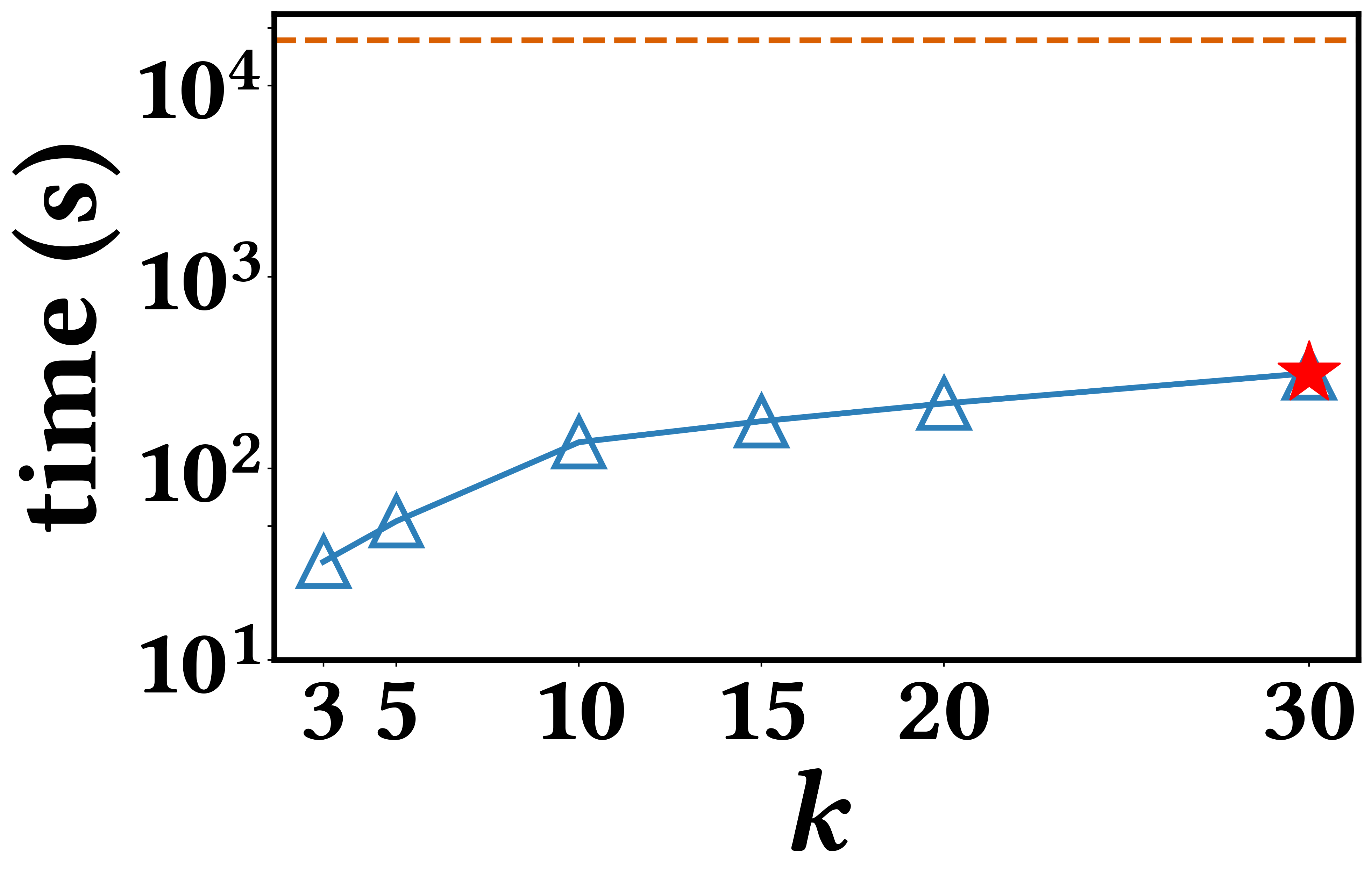}
        \label{fig:WD-singer_efficiency}
    }
    \caption{Path collection running time for StruProKGR$_{\text{RW}}$ and StruProKGR on five datasets with varying \textit{k}.}
    \label{fig:path collection efficiency}
\end{figure*}
\begin{table*}[htbp]
	\centering
    \small
	\caption{Path collection running time (s) comparison of random walk and StruProKGR with optimal setting.}
    \makebox[\columnwidth][c]{
        \begin{tabular}{c|ccccc}
            \toprule
            Method & FB15K-237-10\% & FB15K-237-20\% & FB15K-237-50\% & NELL23K & WD-singer \\
            \midrule
            Random Walk & 27131.76 & 7059.71 & 2917.02 & 12054.34 & 17274.81 \\
            StruProKGR & 2939.74 & 981.22 & 1357.07 & 824.14 & 314.47 \\
            \midrule
            Speedup & $9.22\times$ & $7.19\times$ & $2.14\times$ & $14.62\times$ & $54.93\times$ \\
            \bottomrule
        \end{tabular}
    }
	\label{tab:runtime comparison}
\end{table*}
To address \textbf{RQ2} that investigates the extent to which the distance-guided path collection algorithm enhances both the effectiveness of the overall reasoning process and the efficiency of the path collection process, a comparative analysis is conducted. 

\subsubsection{Effectiveness Analysis}
The effectiveness of the distance-guided path collection phase is assessed by comparing StruProKGR with a random walk-based variant, namely StruProKGR$_{\text{RW}}$. 
As reported in \Cref{table:path collection}, StruProKGR consistently outperforms StruProKGR$_{\text{RW}}$ across all datasets, with relative MRR gains ranging from 0.4\% on WD-singer to 3.5\% on FB15K-237-10\%.
Similar trends hold for Hits@3 and Hits@10. These improvements demonstrate that distance guidance effectively mitigates the randomness of random walks, ensuring that collected paths are more relevant for reasoning.  

\subsubsection{Efficiency Analysis}
The efficiency of the distance-guided path collection is evaluated by comparing the path collection running time of StruProKGR against random walk across varying values of the maximum branch number $k$, with results presented in \Cref{fig:path collection efficiency}.
As expected, the running time increases with larger $k$, and the optimal $k$ also grows with sparsity. However, it remains efficient, with timeouts (exceeding 10 hours) only occurring at $k=30$ on FB15K-237-20\% and at $k \geq 20$ on FB15K-237-50\%. 
With optimal $k$ settings, \Cref{tab:runtime comparison} shows that StruProKGR demonstrates efficiency improvements over the random walk approach across all datasets. Specifically, StruProKGR achieves up to 54.93$\times$ speedup compared to random walk, substantially reducing computational overhead while preserving accuracy.

\subsection{Ablation Study (RQ3)}
To address \textbf{RQ3} and examine the impacts of different components in the path structure-based reasoning phase, including the intra-path structure and inter-path structure, we conduct an ablation study on NELL23K and WD-singer, the two most sparse datasets.
As shown in \Cref{table:ablation}, removing both structures leads to the largest performance drop, highlighting their complementary importance. Nevertheless, even without structural modeling, StruProKGR remains competitive and surpasses prior methods such as DacKGR and SparKGR. Between the two, intra-path modeling contributes slightly more than inter-path, but the small performance gaps across variants indicate the overall robustness of StruProKGR in sparse scenarios.
\begin{table}[htb]
	\setlength{\tabcolsep}{5pt}
	\centering
	\small
	\caption{Ablation study results.}
	\setlength{\tabcolsep}{6pt}
		\begin{tabular}{c|cc|cc}
			\toprule[1pt]
			\multirow{2}{*}{Method} &\multicolumn{2}{c|}{NELL23K} &\multicolumn{2}{c}{WD-singer}\\
            [-1em]\\
			\cline{2-5}\\[-0.8em]
            &MRR &Hits@3 &MRR &Hits@3 \\
			\midrule[0.5pt]
            StruProKGR &\textbf{0.262} &\textbf{28.5} &\textbf{0.461} &\textbf{49.8}\\
            \multicolumn{1}{r|}{\textit{w/o} structure} & 0.260 &28.2 &0.459 &49.3 \\
            \multicolumn{1}{r|}{\textit{w/o} intra} &0.261 &28.1 &0.459 &49.3 \\
            \multicolumn{1}{r|}{\textit{w/o} inter} &0.261 &28.2 &0.460 &49.7 \\
			\bottomrule[1pt]
		\end{tabular}
	\label{table:ablation}
\end{table}

\section{Conclusions}
\label{sec:Conclusions}
In this paper, we presented StruProKGR, a structural and probabilistic framework for sparse KG reasoning. It employs a distance-guided strategy to facilitate path collection, significantly reducing computational costs while improving the relevance of collected paths. Additionally, it incorporates probabilistic path aggregation to evaluate path reliability and utilizes the structural properties of the graph for accurate knowledge inference. Experiments across five benchmark datasets demonstrate its superior performance over existing path-based methods, providing effectiveness, efficiency, and interpretability for sparse KG reasoning.

\section*{Limitations}
Although StruProKGR adopts a probabilistic formulation for modeling path structures, the underlying estimates rely on empirical frequency statistics rather than true probability distributions, a constraint shared with prior works. Pure probabilistic modeling remains infeasible in sparse KGs, and frequency-based estimates may violate probability bounds. StruProKGR alleviates this issue through an odds-form Bayesian update, which improves numerical stability in practice.
Besides, similar to many sparse KG reasoning approaches, StruProKGR is not well-suited to dynamic KGs, as updates to the graph require recomputing path statistics and structural probabilities. Developing efficient mechanisms for supporting graph updates represents an important direction for future research.


\bibliography{custom}

\appendix
\section{Methodology Details}
\subsection{Distance-Guided Path Collection}
\label{appendix:DGPC}
\Cref{alg:distance-guided path collection} shows the process of distance-guided path collection, which integrates global distance information into local DFS expansion, enabling efficient pruning and effective prioritization of short and relevant paths.
\begin{algorithm*}[t]
    \caption{Distance-Guided Path Collection}
    \label{alg:distance-guided path collection}
    \KwIn{Sparse KG $\mathcal{G}_s=\{(h,r,t)|h, t\in\mathcal{E}, r\in\mathcal{R}\}$, entity type mapping function $\psi : \mathcal{E} {\rightarrow} \mathcal{C}$, maximum path length $l_{max}$, maximum branch number $k$.}
    \KwOut{Set of collected type-specific relation paths $\mathcal{P}$.}
    \tcp{\small Step 1. Precompute shortest distances}
    \ForEach{$u \in \mathcal{E}$}{
        Compute distance matrix $dist[u][v]$ \textbf{for all} $v$ with $dist[u][v] \leq l_{max}$\;
    }
    \tcp{\small Step 2. Distance-guided path collection}
    $\mathcal{P}[c][r] \gets \emptyset\ \textbf{for all } c \in \mathcal{C}, r \in \mathcal{R}$\;
    \ForEach{$(h,r,t) \in \mathcal{G}_{s}$}{
        Initialize stack $S \gets [(h, [], \{h\})]$\;
        \While{$S$ is not empty}{
            $(u, path, visited) \gets S.\text{pop()}$\;
            \If{$u = t$}{
                $\mathcal{P}[\psi(h)][r] \gets \mathcal{P}[\psi(h)][r] \cup \{\text{path}\}$\;
                \textbf{continue}\;
            }
            $nextEntities \gets []$\;
            \ForEach{$(rel, v)$ \textnormal{in adjacency list of} $u$}{
                \If{$v \notin visited$ \textnormal{\bf and} $dist[v][t] \leq l_{max} - len(path) - 1$}{
                    $nextEntities.\text{append}((rel,v))$\;
                    
                }
            }
            \If{$|nextEntities|>k$}{
                sort $nextEntities$ by $dist[v][t]$ in ascending order\;
                $nextEntities \gets nextEntities[:k]$\;
            }
            \ForEach{$(rel, v) \in \textnormal{\texttt{reverse}}(nextEntities)$}{
                $S.\text{push}((v, path + [rel], visited \cup \{v\}))$\;
            }
        }
    }
    \Return{$\mathcal{P}$}\;
\end{algorithm*}
\Cref{alg:distance-guided path collection} consists of two steps: \emph{distances precomputation} (lines 1-2) and \emph{distance-guided path collection} (lines 3-19). 
In the first step, all pairwise distances up to the maximum path length $l_{max}$ are computed.
In the second step, a bounded DFS from each triple $(h,r,t)$ is performed, maintaining at most $k$ most promising next-hop expansions at every step.
For each training triple $(h,r,t)$, we push the initial state $(h,[],\{h\})$ onto a stack $S$ (line 5) and then repeatedly pop a state $(u,path,visited)$, where $u$ represents the current entity being explored, $path$ denotes the list of relations traversed, and $visited$ tracks the set of entities already encountered to avoid cycles. (lines 6-7). 
If $u=t$, which indicates that the current path can reach the tail entity, then the path will be recorded in $\mathcal{P}[\psi(h)][r]$ as a type-specific relation path of type $\psi(h)$ and relation $r$ (lines 8-9). Subsequently, any further expansion will be skipped (line 10). 
Otherwise, we enumerate all outgoing edges $(rel,v)$ of $u$ and include in $nextEntities$, the list of candidate neighbor nodes for extending the path, only those neighbors $v \notin visited$ whose precomputed distance to $t$ satisfies
\begin{equation}
    dist[v][t] \leq l_{max}-len(path)-1.
\end{equation}
This criterion (line 13) ensures that no loops are present in the path, and each candidate $v$ has at least one path to the target within the remaining length budget.
To further narrow down the search, if the size of $nextEntities$ exceeds $k$, the maximum branch number, the candidates will be sorted by $dist[v][t]$ in ascending order and only the top $k$ will be kept (lines 15-17). 
Finally, for each $(rel,v)$ in this pruned set, we push the updated state $(v,path \cup [rel], visited\cup\{v\})$ onto $S$ in reverse order, ensuring that entities closer to $t$ are explored first (lines 18-19). 

\begin{figure}[htbp]
    \centering
    \includegraphics[width=\linewidth]{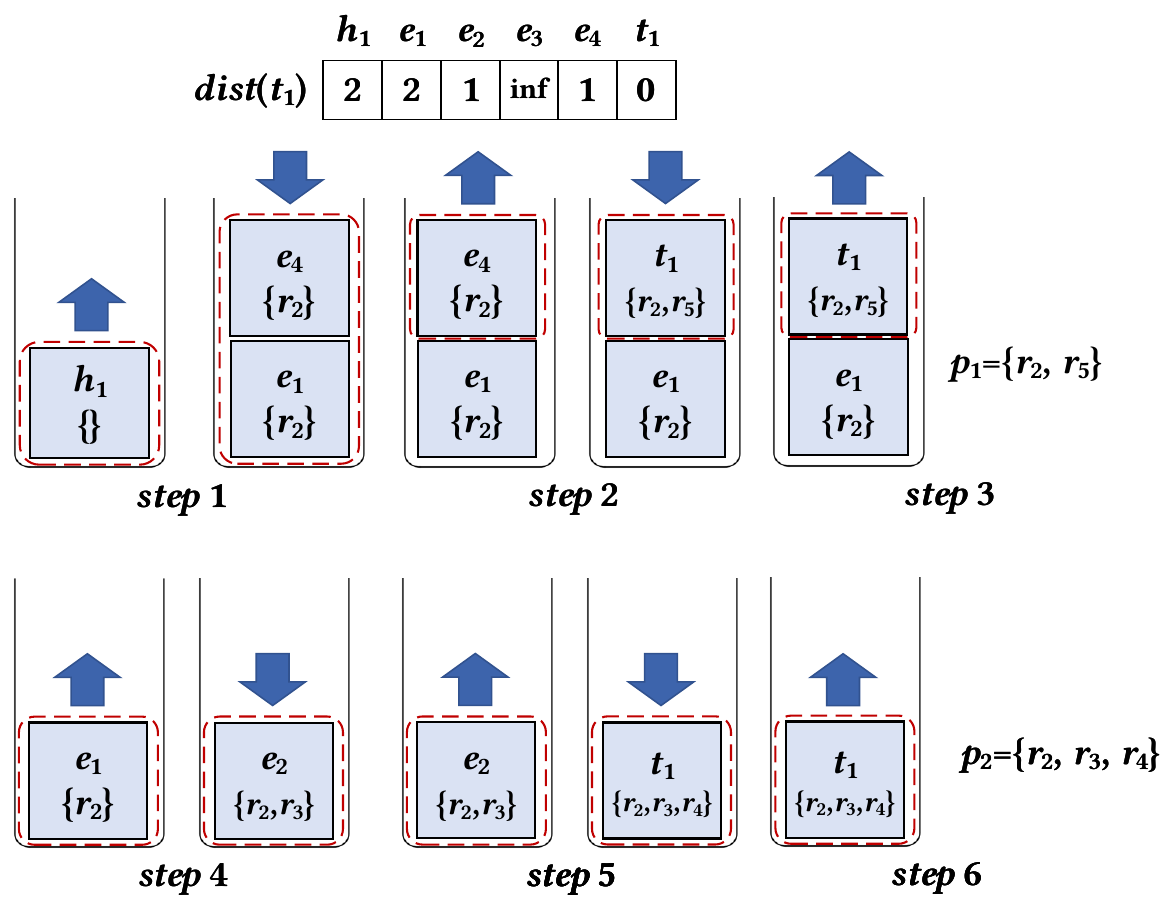}
    \caption{Illustration of distance-guided path collection.}
    \label{fig:path collection}
\end{figure}
\begin{example}
\Cref{fig:path collection} illustrates the distance-guided path collection process for the toy sparse KG depicted in \Cref{fig:framework}. Given the triple $(h_1, r_1, t_1)$ and the precomputed distance to $t_1$, as shown by the $dist(t_1)$ with values 2, 2, 1, inf, 1, and 0 for $h_1$, $e_1$, $e_2$, $e_3$, $e_4$, and $t_1$ respectively.
The process unfolds in six steps, and the $visited$ set in each state is omitted for simplicity. In the first step, $h_1$ is paired with an empty relation set $\{\}$. Its neighbors, $e_1$ and $e_4$, are then pushed onto the stack in descending order of their distance to $t_1$. Consequently, $e_4$ is explored first, as it is closer to $t_1$ than $e_1$. Subsequent steps (2-3) explore paths from $e_4$ to $t_1$, accumulating relations $\{r_2\}$ and $\{r_2, r_5\}$ respectively, forming path $p_1 = \{r_2, r_5\}$. Steps 4-6 extend the search from $e_1$ and $e_2$ to $t_1$, resulting in path $p_2 = \{r_2, r_3, r_4\}$, while $e_3$ is pruned during exploration as it is unreachable to $t_1$. 
\end{example}

\begin{lemma}
    The time complexity of \Cref{alg:distance-guided path collection} is 
    $O\bigl(|\mathcal{E}|\cdot (|\mathcal{E}|+|\mathcal{G}_{s}|) + k^{l_{max}} \cdot |\mathcal{G}_{s}|\bigr)$, where $|\mathcal{G}_{s}|$ is the number of triples in $\mathcal{G}_{s}$.
    
\end{lemma}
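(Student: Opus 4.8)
The plan is to decompose the runtime into the two algorithmic steps and bound each separately, since the claimed complexity $O\bigl(|\mathcal{E}|\cdot(|\mathcal{E}|+|\mathcal{G}_{s}|)+k^{l_{max}}\cdot|\mathcal{G}_{s}|\bigr)$ is manifestly a sum of a precomputation term and a path-collection term. First I would analyze the \emph{distances precomputation} (lines 1--2): we run one truncated BFS from each source $u\in\mathcal{E}$. A single BFS over a graph with $|\mathcal{E}|$ vertices and $|\mathcal{G}_{s}|$ edges costs $O(|\mathcal{E}|+|\mathcal{G}_{s}|)$ under adjacency-list representation (the depth truncation at $l_{max}$ only helps, so this is a valid upper bound), and repeating it for all $|\mathcal{E}|$ sources yields exactly the first term $O\bigl(|\mathcal{E}|\cdot(|\mathcal{E}|+|\mathcal{G}_{s}|)\bigr)$. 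I would note that storing and initializing the $dist$ matrix also fits within this budget.

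Next I would bound the \emph{distance-guided path collection} (lines 3--19). The key structural observation is that the outer loop runs once per triple $(h,r,t)\in\mathcal{G}_{s}$, contributing the factor $|\mathcal{G}_{s}|$, and that for each triple the bounded DFS explores a search tree whose branching factor is capped at $k$ by the top-$k$ pruning (lines 15--17) and whose depth is capped at $l_{max}$ by the distance-feasibility test (line 13, via Equation~$dist[v][t]\leq l_{max}-len(path)-1$). A tree of branching factor at most $k$ and depth at most $l_{max}$ has at most $O(k^{l_{max}})$ nodes. I would argue that the per-node work is $O(1)$ amortized in this asymptotic count, or absorb the per-node cost of scanning and sorting the neighbor list into the constant, since the dominant growth is the number of visited states; this gives the second term $O\bigl(k^{l_{max}}\cdot|\mathcal{G}_{s}|\bigr)$. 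Summing the two steps produces the stated bound.

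The main obstacle I anticipate is making the tree-size argument airtight, specifically justifying the clean $k^{l_{max}}$ bound on the number of DFS states per triple. The subtlety is that the sorting step at a node with $d$ candidate neighbors costs $O(d\log d)$, and before pruning $d$ can be as large as the out-degree of the current entity, not $k$; one must verify that the sorting cost is dominated by (or can be folded into) the per-node accounting so that it does not inflate the asymptotic order. I would handle this by observing that each node expansion performs at most $k$ \emph{pushes} onto the stack, so the total number of stack states generated from a single triple is bounded by the size of a $k$-ary tree of height $l_{max}$, namely $\sum_{i=0}^{l_{max}}k^{i}=O(k^{l_{max}})$; the neighbor-enumeration and sorting work at each expanded node is then charged against that node, and under the standard assumption that per-entity degrees are bounded relative to the global parameters this work is subsumed into the $k^{l_{max}}$ factor. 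A secondary point worth a sentence is that the $visited$-set cycle check (line 13) only restricts the search further, so it cannot increase the bound and may safely be ignored for the upper-bound argument.
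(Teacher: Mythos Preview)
Your proposal is correct and follows essentially the same two-part decomposition as the paper: $|\mathcal{E}|$ many BFS calls for the precomputation term, and a $k$-ary depth-$l_{max}$ DFS tree per training triple for the collection term. You are in fact more careful than the paper's own argument, which simply asserts the $O(k^{l_{max}})$ state bound without discussing the neighbor-enumeration and sorting cost that you flag; the paper absorbs that cost silently, exactly as you propose to do.
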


\begin{proof}
    The initialization of distances involves $O(|\mathcal{E}|)$ BFS operations, each taking $O(|\mathcal{E}|+|\mathcal{G}_{s}|)$ time in the worst case, i.e., each distances $\leq l_{max}$. Thus, the total precomputation cost is
    $O\bigl(|\mathcal{E}|\cdot (|\mathcal{E}|+|\mathcal{G}_{s}|)\bigr).$
    During the DFS phase, each popped state expands at most $k$ neighbors, and the maximum stack depth is $l_{max}$. In the worst case, the number of DFS states is bounded by $O(k^{l_{max}})$, giving    
    $O\bigl(k^{l_{max}} \cdot |\mathcal{G}_{s}|\bigr)$
    time in theory. 
    In practice, the BFS operates up to a maximum depth of $l_{max}$, and the distance-based pruning in line 13 significantly reduces the number of explored branches, often resulting in a much lower time complexity.
\end{proof}

\subsection{The Calculation of Path Probability and Joint Probability}
\label{appendix:Probability_Calculation}
\Cref{alg:path traversal} shows the batch search-based path traversal process when calculating path probability and joint probability. It begins by initializing a set of current entities with the head entity $h$ and an associated count of 1 (line 1). For each relation $r_i$ in the path $p = [r_1, r_2, \dots, r_n]$, it iterates through the current entities and identifies all triples in the sparse KG $\mathcal{G}_s$ that match the relation $r_i$, collecting the next set of reachable entities (lines 2-6). The counts of these entities are updated by adding the count of their predecessor entities, effectively tracking the number of ways to reach each entity (line 6). After processing all relations in the path, the algorithm returns the final set of reachable entities along with their counts (line 8), enabling efficient computation of the reachable entities that are needed for probability calculations.
\begin{algorithm}[]
    \caption{Path Traversal}
    \label{alg:path traversal}
    \KwIn{Sparse KG $\mathcal{G}_{s}$, entity $h$, path $p = [r_1, r_2, \dots, r_n]$.}
    \KwOut{Set of reachable entities $ans$.}
    $curEntities \gets \{h:1\}$\;
    \ForEach{$r_i \in p$}{
        $nextEntities \gets \{\}$\;
        \ForEach{$e_0 \in curEntities$}{
            \ForEach{$(e_0, r_i, e_1) \in \mathcal{G}_s$}{
                $nextEntities[e_1] \gets nextEntities[e_1] + curEntities[e_0]$\;
            }
        }
        $curEntities \gets nextEntities$\;
    }
    \Return $curEntities$\;
\end{algorithm}
\begin{lemma}
    The time complexity for calculating path probabilities $P(p|r)$ and joint probabilities $P(p_i, p_j|r)$ is $O(|\mathcal{G}_s| \cdot N_{paths} \cdot l_{max} + N_{pairs})$, where $N_{paths}$ is the average number of paths per relation, and $N_{pairs}$ is the number of path pairs.
\end{lemma}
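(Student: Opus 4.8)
The plan is to split the target bound into two additive pieces that mirror the two quantities being computed: the single-path statistics behind $P(p|r)$, and the pairwise statistics behind $P(p_i,p_j|r)$. I would first establish the cost of one invocation of \Cref{alg:path traversal}, then multiply by the number of paths to obtain the first term, and finally argue that the pairwise contribution collapses to $O(N_{pairs})$ by reusing cached per-path results rather than re-traversing the graph.

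First I would bound a single call to \Cref{alg:path traversal}. The outer loop runs once per relation in $p$, hence at most $l_{max}$ times. Within one iteration, the nested loops examine, for every entity $e_0$ in the current frontier $curEntities$, exactly those triples $(e_0, r_i, e_1)$ carrying the relation $r_i$. Since frontier entities are distinct dictionary keys, each such triple is inspected at most once per step, so the work in one step is bounded by the number of edges in $\mathcal{G}_s$, i.e. $O(|\mathcal{G}_s|)$; aggregating over the at most $l_{max}$ steps gives $O(|\mathcal{G}_s|\cdot l_{max})$ per traversal. Crucially, the batch frontier carries all relevant source entities simultaneously, so this single traversal already yields the reachable-entity counts needed to accumulate $S_c(r,p)$ and $T_c(r,p)$ for the whole path at once. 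Running one traversal per collected path over the $N_{paths}$ paths (on average) associated with a relation then yields $O(|\mathcal{G}_s|\cdot N_{paths}\cdot l_{max})$, with the $O(1)$ division that forms each $P(p|r)$ absorbed into this term.

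For the joint-probability phase, the honest obstacle is that a naive implementation would launch a fresh traversal for every one of the $N_{pairs}$ pairs, costing $O(|\mathcal{G}_s|\cdot N_{pairs}\cdot l_{max})$ and destroying the bound. The key step, and the part I expect to be the main difficulty to justify rigorously, is to show that the co-occurrence data required for every pair is already materialized during the single path-probability pass: by caching, for each path, the reached entities together with the indicator of whether the correct tail is hit, the counts $JS(r,p_i,p_j)$ and $JT(r,p_i,p_j)$ for a pair can be assembled in amortized constant time, so the final division $P(p_i,p_j|r)=JS/JT$ costs $O(1)$ per pair and $O(N_{pairs})$ overall. Summing the two phases then gives $O(|\mathcal{G}_s|\cdot N_{paths}\cdot l_{max}+N_{pairs})$, as claimed; I would close by remarking that the per-step bound $O(|\mathcal{G}_s|)$ is a worst case that is typically far smaller in practice, since the frontier is usually much sparser than the full edge set.
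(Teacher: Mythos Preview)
Your second-term argument matches the paper's: both reuse the cached single-path traversal results so that each of the $N_{pairs}$ pairwise computations costs $O(1)$ (the paper phrases this as one ``intersection operation'' per pair), giving $O(N_{pairs})$.

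Your first-term argument, however, takes a different decomposition from the paper's and does not close. In the paper's accounting, the factor $|\mathcal{G}_s|$ is the number of \emph{training triples} iterated over: for every $(h,r,t)\in\mathcal{G}_s$ one invokes \Cref{alg:path traversal} from that single $h$, once per path in $\mathcal{P}(r)$ (up to $N_{paths}$ paths), each invocation being charged $l_{max}$ hops---hence $|\mathcal{G}_s|\cdot N_{paths}\cdot l_{max}$. You instead use $|\mathcal{G}_s|$ as a bound on the per-hop work \emph{inside} one traversal and then claim only $N_{paths}$ traversals are needed because ``the batch frontier carries all relevant source entities simultaneously.'' But \Cref{alg:path traversal} is seeded with a single head (line~1 sets $curEntities\gets\{h{:}1\}$); the batching is over the expanding frontier at later hops, not over multiple source triples. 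Even if you modified the algorithm to seed all heads at once, the merged counts could no longer be attributed back to individual $(h,t)$ pairs, which is exactly what $S_c(r,p)$ and $T_c(r,p)$ require. And under your own bookkeeping, since $N_{paths}$ is the average number of paths \emph{per relation}, the total number of batched traversals would be $|\mathcal{R}|\cdot N_{paths}$, leaving your product short by a factor of $|\mathcal{R}|$. So the formula you reach is right, but the route by which you reach it is not---the $|\mathcal{G}_s|$ in the bound comes from the outer loop over training triples, not from the per-step edge scan.
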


\begin{proof}
    When calculating path probability, processing each triple in $\mathcal{G}_s$ involves executing up to $N_{paths}$ paths of length $l_{max}$, yielding a complexity of $O(|\mathcal{G}_s| \cdot N_{paths} \cdot l_{max})$. 
    For joint probability, although the results from the path traversals can be reused, an intersection operation is necessary for $N_{pairs}$ path pairs, contributing a complexity of $O(N_{pairs})$. Therefore, the total time cost for computing probabilities is $O(|\mathcal{G}_s| \cdot N_{paths} \cdot l_{max} + N_{pairs})$.
\end{proof}

\subsection{Path Structure-based Reasoning}
\subsubsection{Mathematical Proof for Inter-path Structure Modeling}
\label{appendix:proof_for_inter_path_structure}
\begin{algorithm*}[t]
    \caption{Path Structure-based Reasoning}
    \label{alg:path_structure_based_reasoning}
    \KwIn{Sparse KG $\mathcal{G}_{s}$, Query $(h, r, ?)$, set of relation paths $\mathcal{P}(r)$ with path probabilities $P(p|r)$, adjusted probabilities $P(p|r)_{hop}$ and joint probabilities $P(p_i, p_j|r)$, the number of explored top paths $N_{top}$.}
    \KwOut{Ranked list of candidate answers $\mathcal{A}$.}
    \tcp{Step 1. Execute top paths to gather candidates}
    $\mathcal{A} \gets \{\}$, $executedPaths \gets 0$\;
    \ForEach{$p \in \mathcal{P}(r)$}{
        $\mathcal{A}' \gets \texttt{PathTraversal}(\mathcal{G}_{s}, h,p)$\;
        \If{$\mathcal{A}' \neq \{\}$}{
            \ForEach{$(a,times) \in \mathcal{A}'$}{
                $\mathcal{A}(a).\text{append}((p,times))$\;
            }
            $executedPaths \gets executedPaths + 1$\;
            \If{$executedPaths == N_{top}$}{
                \textbf{break}\;
            }
        }
    }
    \tcp{Step 2. Calculate the probability for each candidate}
    \ForEach{$a \in \mathcal{A}$}{
        Initialize probability $P(a) \gets 0$\;
        \ForEach{$(p,times) \in a$}{
            $P(p|r)_{intra} \gets$ calculate using the intra-path structure.\;
            $P(p|r)_{inter} \gets$ calculate using the inter-path structure\;
            $P(a) \gets P(a) + P(p|r)_{inter} - P(a) \cdot P(p|r)_{inter}$\;
        }
    }
    Sort $\mathcal{A}$ by $P(a)$ in descending order\;
    \Return $\mathcal{A}$\;
\end{algorithm*}
\textbf{The Odds Form of Bayes' Theorem.} 
The standard form of Bayes' theorem is unsuitable for updating path probabilities, as these probabilities are calculated based on statistical data. This can lead to updated probabilities $P(p|r)_{inter} > 1$, which is not valid.
To overcome this issue, we utilize the odds form of Bayes' theorem, which ensures that the updated probability remains within the valid range $[0,1]$~\cite{Joyce2021bayes}. The odds form is defined as follows:
\begin{equation}
    \frac{P(A | B)}{P(\neg A | B)} = \frac{P(A)}{P(\neg A)} \times \frac{P(B | A)}{P(B | \neg A)},
\end{equation}
where $\frac{P(A)}{P(\neg A)}$ and $\frac{P(A | B)}{P(\neg A | B)}$ are known as the prior odds and the posterior odds, respectively, while $\frac{P(B | A)}{P(B | \neg A)}$ is called the likelihood ratio. 
In this context, $A$ represents the event that a specific path $p_i$ correctly infers the relation $r$, while $B$ denotes the evidence obtained from other paths.

We utilize the path probability and the joint probability to form the basis of our model. 
Strictly speaking, the interaction between paths should be captured through the likelihood ratio:
\begin{equation}
    LR(p_i, p_j) = \frac{P(p_j | p_i, r)}{P(p_j | \neg p_i, r)},
\end{equation}
where $P(p_j | p_i, r)$ is the conditional probability that $p_j$ is correct given $p_i$ is correct, and $P(p_j | \neg p_i, r)$ is the probability that $p_j$ is correct given $p_i$ is incorrect.
However, computing this ratio requires conditioning on the unobserved correctness of $p_i$, which is computationally intensive and impractical in large KGs with many paths. Pairwise computation across all paths in $\mathcal{P}(r)$ further exacerbates the scalability issue. 

\smallskip
\noindent\textbf{A Scalable Approximation.} 
To address these challenges, we propose an approximation that aggregates evidence from multiple paths while avoiding the need for exact conditional probabilities:
\begin{multline}
    LR(p_i, \mathcal{P}(r) \setminus \{p_i\}) = \\
    \frac{\sum_{p_j} P(p_i, p_{j} | r)}{\sum_{p_j} [P(p_i | r) + P(p_j | r) - P(p_i | r) \cdot P(p_j | r)]},
\end{multline}
where $p_j \in \mathcal{P}(r) \setminus \{p_i\}$ and subject to the condition $P(p_j|r)_{hop} > P(p_i|r)_{hop}$.
The ratio compares the observed joint correctness to the expected correctness under independence. A value greater than 1 suggests that the paths are more likely to be correct together than independently, indicating collaboration, while a value less than 1 suggests inhibition.
By aggregating over multiple paths rather than computing pairwise conditionals, the approach scales linearly with the number of relevant paths, making it feasible for large KGs.
The choice to include only paths $p_j$ where $P(p_j|r)_{hop} > P(p_i|r)_{hop}$ leverages the higher reliability of $p_j$ to provide more compelling evidence for updating the probability of $p_i$. By prioritizing these stronger, more trustworthy paths, the influence of noisy or less dependable signals is reduced, thereby improving the accuracy of the inference. 
Additionally, this selective focus reduces computational overhead by limiting the number of paths considered, making the process both more efficient and effective.

\subsubsection{Path Structure-based Reasoning Algorithm}
\label{appendix:reasoning_algorithm}
\Cref{alg:path_structure_based_reasoning} shows the detailed process of path structure-based reasoning. Given a query $(h,r,?)$, it first traverses the top $N_{top}$ relation paths in $\mathcal{P}(r)$ from the head entity $h$ using \texttt{PathTraversal}, collecting candidate answers $\mathcal{A}$ (lines 1-9). Next, for each candidate answer $a \in \mathcal{A}$, it initializes $P(a) = 0$ and updates $P(a)$ by aggregating path probabilities $P(p|r)_{inter}$, which is calculated by considering the intra-path structure and the inter-path structure, sequentially (lines 10-15). Finally, the candidate answers in $\mathcal{A}$ are ranked based on $P(a)$, and the sorted list is returned (lines 16-17).

\begin{lemma}
    The time complexity for path structure-based reasoning is $O(\frac{|\mathcal{A}| \cdot N_{top}^2}{2})$, where $|\mathcal{A}|$ is the number of candidate answers, and $N_{top}$ is the number of explored top paths.
\end{lemma}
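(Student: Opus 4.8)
The plan is to bound the cost of the two-step reasoning procedure described in \Cref{alg:path_structure_based_reasoning} by charging work to each candidate answer. First I would observe that Step 1 (lines 1--9) traverses at most $N_{top}$ relation paths, so the number of candidate answers collected and the number of $(p,times)$ entries recorded across all candidates are both controlled by $N_{top}$; this step is subsumed by the dominant cost and need not be separately accounted for in the asymptotic bound. The real work to count is in Step 2 (lines 10--15), where for each candidate answer $a \in \mathcal{A}$ we iterate over the paths contributing to $a$ and, for each such path $p$, compute $P(p|r)_{inter}$.

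The key step is to pin down the cost of a single $P(p|r)_{inter}$ evaluation. From \cref{appendix:proof_for_inter_path_structure}, computing $P(p_i|r)_{inter}$ requires evaluating the likelihood ratio $LR(p_i, \mathcal{P}(r)\setminus\{p_i\})$, which aggregates over the other relevant paths subject to $P(p_j|r)_{hop} > P(p_i|r)_{hop}$. Since each candidate is supported by at most $N_{top}$ paths, and for each such path the inter-path update sums over the remaining contributing paths, the pairwise interaction structure yields on the order of $\binom{N_{top}}{2}$ path-pair operations per candidate. Summing a path against every other higher-ranked path, over all contributing paths, gives the characteristic $\frac{N_{top}(N_{top}-1)}{2} = O(N_{top}^2/2)$ term; multiplying by the number of candidates $|\mathcal{A}|$ produces the claimed $O\!\bigl(\frac{|\mathcal{A}| \cdot N_{top}^2}{2}\bigr)$. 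I would also note that the joint probabilities $P(p_i,p_j|r)$ and the path probabilities are precomputed (per \cref{appendix:Probability_Calculation}), so each pairwise term costs $O(1)$ lookup, which is what makes the per-candidate cost exactly quadratic in $N_{top}$ rather than incurring additional traversal overhead.

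The main obstacle I anticipate is justifying the factor $N_{top}^2/2$ cleanly rather than loosely, because the inter-path summation is restricted by the condition $P(p_j|r)_{hop} > P(p_i|r)_{hop}$. The natural argument is that when the contributing paths are processed in ranked order, each path only aggregates evidence from strictly higher-ranked paths, so across all $N_{top}$ paths of a candidate the total number of ordered pairs considered is exactly $\binom{N_{top}}{2}$, giving the $\tfrac{1}{2}$ constant; I would make this the crux of the argument and treat the per-candidate bookkeeping (the $P(a)$ aggregation in line 15, which is linear in the number of contributing paths) as a lower-order term absorbed into the quadratic bound. The final sort of $\mathcal{A}$ in line 16 costs $O(|\mathcal{A}|\log|\mathcal{A}|)$, which I would confirm is dominated by the Step 2 cost under the assumption $N_{top}^2 = \Omega(\log|\mathcal{A}|)$, so that the stated bound indeed captures the overall complexity.
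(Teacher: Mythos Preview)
Your proposal is correct and follows essentially the same approach as the paper: both charge $O(N_{top}^2/2)$ per candidate for the inter-path probability update (from summing over higher-ranked paths for each of the at most $N_{top}$ contributing paths) and multiply by $|\mathcal{A}|$. The paper additionally writes out an $O(l_{max}\cdot N_{top})$ path-traversal term per candidate before absorbing it as lower order, whereas you dismiss Step~1 up front; your treatment of the $\tfrac12$ constant via the ranking condition and your remarks on precomputed probabilities and the final sort are more explicit than the paper's but do not change the argument.
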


\begin{proof}
    For each candidate $a \in \mathcal{A}$, the algorithm evaluates each path $p$ reaching $a$, takes at most $O(l_{max} \cdot N_{top})$ time for path traversal. For each path, it adjusts probabilities by considering paths with higher probability than it, adding $O(\frac{N_{top}^2}{2})$ complexity per candidate. Thus, the total time complexity is $O(|\mathcal{A}| \cdot (l_{max} \cdot N_{top} +  \frac{N_{top}^2}{2}))=O(\frac{|\mathcal{A}| \cdot N_{top}^2}{2})$.
\end{proof}

\section{Case Study}
To provide deeper insights into the reasoning process of the StruProKGR, a case study is conducted with the query $(Kathy\ Cash, father, ?)$ in WD-singer, for which the correct answer is $Johnny\ Cash$. The case study examines the top 10 paths that lead to the correct answer, focusing on how the incorporation of intra-path and inter-path structures affects path rankings. 
The paths, along with their base and structured rankings, are presented in \Cref{tab:case study}. The base rank reflects the initial ordering of $P(p|r)_{hop}$ without structural adjustments, while the structural rank accounts for the combined effects of intra-path and inter-path interactions, i.e., ranked by $P(p|r)_{inter}$.

The analysis of \Cref{tab:case study} yields several key insights. Directly relevant paths, such as $(child^{-1})$, $(sibling^{-1}, father)$, and $(sibling, father)$, consistently occupy the top ranks, showing the robustness of structural adjustments in preserving correct evidence. Less coherent paths, e.g., $(child^{-1}, spouse)$, are demoted (rank 7 $\rightarrow$ 10), while multi-hop variants like $(sibling, sibling, father)$ and $(sibling^{-1}, sibling^{-1}, father)$ are promoted (11 $\rightarrow$ 8, 12 $\rightarrow$ 9), indicating that structural modeling enhances the visibility of contextually relevant but indirect paths. Overall, StruProKGR effectively prioritizes the most plausible reasoning chains, improving both accuracy and interpretability.
\begin{table}[t]
    \centering
    \small
    \caption{Top 10 relation paths for query $(Kathy\ Cash, father, ?)$.}
    \label{tab:case study}
        \setlength{\tabcolsep}{1.5pt}
        \begin{tabular}{c|cc}
            \toprule[1pt]
    		\multicolumn{3}{c}{Query: $(Kathy\ Cash, father, ?)$}\\
    		\midrule[0.5pt]
            Path & Base Rank & Struc. Rank \\
            \midrule[0.5pt]
            $(child^{-1})$ & 1 & 1 \\
            $(sibling^{-1},\ father)$ & 2 & 2 \\
            $(sibling,\ father)$ & 3 & 3 \\
            $(sibling^{-1},\ child^{-1})$ & 5 & 4 \\
            $(mother,\ spouse)$ & 4 & 5 \\
            $(sibling,\ child^{-1})$ & 6 & 6 \\
            $(child^{-1},\ child,\ father)$ & 8 & 7 \\
            $(sibling,\ sibling,\ father)$ & 11 & 8 \\
            $(sibling^{-1},\ sibling^{-1},\ father)$ & 12 & 9 \\
            $(child^{-1},\ spouse)$ & 7 & 10 \\
            \bottomrule[1pt]
        \end{tabular}
\end{table}
\end{sloppypar}
\end{document}